\documentclass[conference]{IEEEtran}

\usepackage{graphicx} 
\usepackage{cite}
\usepackage{subfig}
\usepackage{url}      
\usepackage[cmex10]{amsmath}
\usepackage{amssymb,amsthm}

\usepackage{algorithm}
\usepackage{algorithmic}

\newtheorem{theorem}{Theorem}


\begin{document}
%
\title{Crowd-ML: A Privacy-Preserving Learning Framework for 
a Crowd of Smart Devices}

\author{\IEEEauthorblockA{Jihun Hamm, Adam Champion, Guoxing Chen, Mikhail Belkin, Dong Xuan}
\IEEEauthorblockA{Department of Computer Science and Engineering\\
The Ohio State University\\
Columbus, OH 43210\\
\\
}
}

%


\maketitle

\begin{abstract}
  Smart devices with built-in sensors, computational
  capabilities, and network connectivity have become increasingly
  pervasive. The crowds of smart devices offer opportunities to collectively
  sense and perform computing tasks in an unprecedented scale. 
  This paper presents Crowd-ML, a privacy-preserving machine learning
  framework for a crowd of smart devices, which can solve a wide range of
  learning problems for crowdsensing data with differential privacy guarantees.
  Crowd-ML endows a crowdsensing system with an ability to learn classifiers or
  predictors online from crowdsensing data privately
  with minimal computational overheads on devices and servers, 
  suitable for a practical and large-scale employment of the framework. 
  We analyze the performance and the scalability of Crowd-ML, and implement the
  system with off-the-shelf smartphones as a proof of concept. 
  We demonstrate the advantages of Crowd-ML with real and simulated experiments 
  under various conditions.
\end{abstract}



%
\IEEEpeerreviewmaketitle

\section{Introduction}

\subsection{Crowdsensing}
\label{sec:crowdsensing}

Smart devices are increasingly pervasive in daily life. 
These devices are characterized by their built-in sensors (e.g.,
accelerometers, cameras, and, microphones), programmable computation ability, and
Internet connectivity via wireless or cellular networks. These
include stationary devices such as smart thermostats and mobile devices
such as smartphones or in-vehicle systems. 
More and more devices are also being interconnected, often referred to
as the ``Internet of Things.'' Inter-connectivity offers opportunities
for crowds of smart devices to collectively sense and compute
in an unprecedented scale. 
Various applications of crowdsensing have been proposed, including
personal health/fitness monitoring, environmental sensing, and monitoring 
road/traffic conditions (see Section~\ref{sec:crowdsensing}), 
and the list is currently expanding.

Crowdsensing is used primarily for collecting and analyzing aggregate data 
from a population of participants. 
However, more complex and useful tasks can be performed 
beyond calculation of aggregate statistics, 
by using machine learning algorithms on crowdsensing data.
Examples of such tasks include:
learning optimal settings of room temperatures for smart thermostats; 
predicting user activity for context-aware services and physical monitoring;
suggesting the best driving routes;
recognizing audio events from microphone sensors.
Specific algorithms and data types for these tasks are different, 
but they can all be trained in standard unsupervised or supervised learning settings:
given sensory features (time, location, motion, environmental measures, etc.),
train an algorithm or model that can accurately predict a variable of interest 
(temperature setting, current user activity, amount of traffic, audio events, etc.). 
Conventionally, crowdsensing and machine learning are performed as two separate processes:
devices passively collect and send data to a central location, and 
analyses or learning procedures are performed at the remote location.
However, current generations of smart devices have computing capabilities
in addition to sensing. 
In this paper, we propose to utilize computing capabilities of smart devices,
and integrate sensing and learning processes together into a crowdsensing system.
As we will show, the integration allows us to design a system with better privacy 
and scalability.

\subsection{Privacy}

Privacy is an important issue for crowdsensing applications.
By assuring participants' privacy, a crowdsensing system can appeal to
a larger population of potential participants, which increases the utility of 
such a system.
However, many crowdsensing systems in the literature do not employ
any privacy-preserving mechanism (see Section~\ref{sec:private learning}),
and existing mechanisms used in crowdsensing (see ~\cite{participatorysensing2})
are often difficult to compare qualitatively across different systems or 
data types. 
In the last decade, differential privacy has gained popularity as a formal and
quantifiable measure of privacy risk in data publishing  \cite{Dwork:2004:CRYPTO,Dwork:2006:TC,Dwork:2006:ALP}.
Briefly, differential privacy measures how much the outcome of a procedure changes
probabilistically by presence/absence of any single subject in the original data.
The measure provides an upper bound on privacy loss {\it regardless of} the content
of data or any prior knowledge an adversary might have.
While differential privacy has been applied in data publishing and machine learning,
(see Section~\ref{sec:private learning}), it has not been broadly adopted in
crowdsensing systems. 
In this paper, we integrate differentially private mechanisms into the crowdsensing
system as well, which can provide strong protection against
various types of possible attacks (see Section~\ref{sec:privacy mechanism}).

\begin{figure}[th]
\centering
\includegraphics[width=0.9\linewidth]{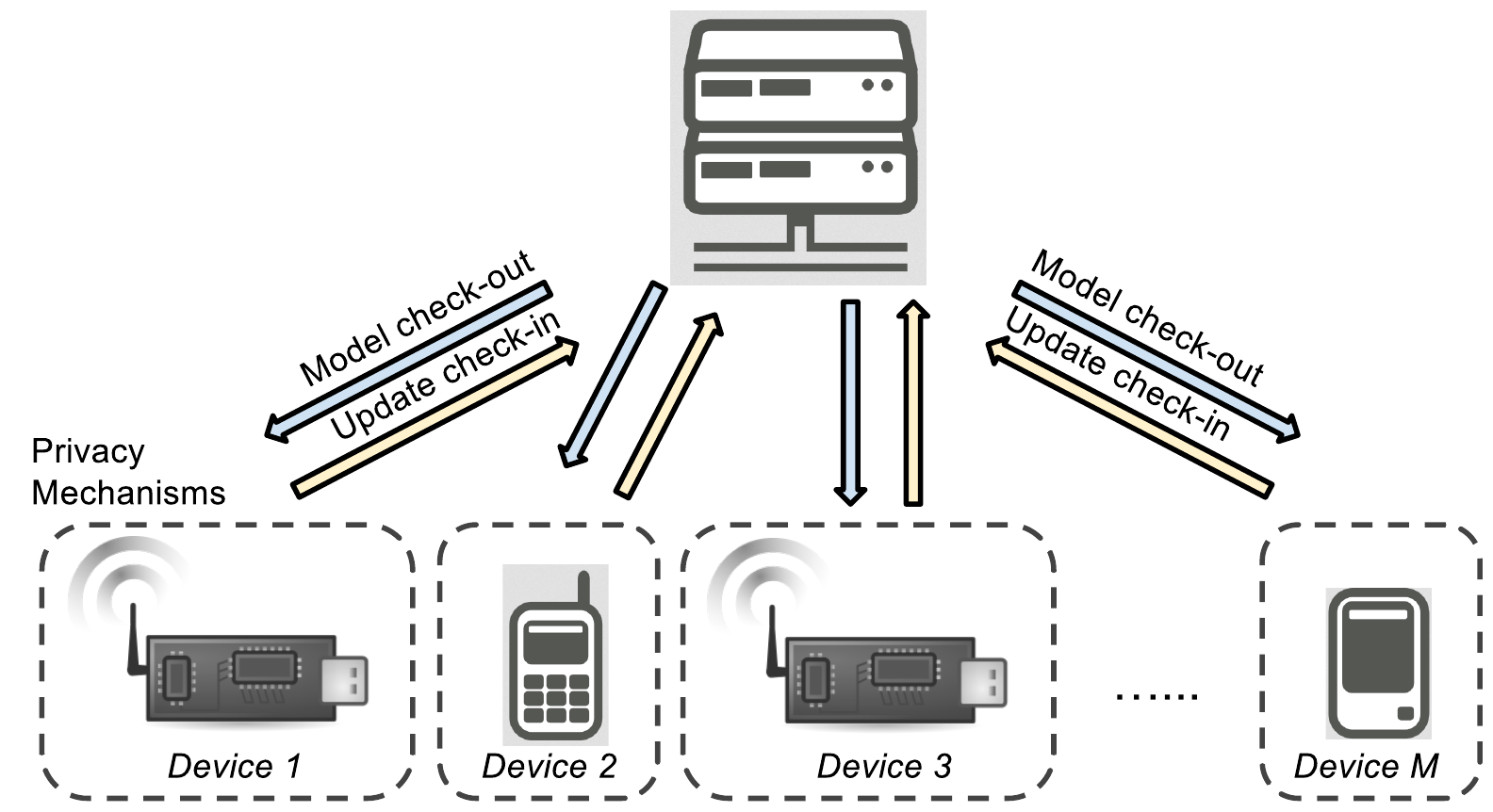}
\caption{Crowd-ML consists of a server and a number of smart devices.
The system integrates sensing, learning, and privacy mechanisms together, 
to learn a classifier or predictor from device-generated data in an online and
distributed way, with formal privacy guarantees.
}
\label{fig:system}
\end{figure}

\subsection{Proposed work}\label{sec:proposed work}

This paper presents Crowd-ML, a privacy-preserving machine learning
framework for crowdsensing system that consists of a server and smart devices
(see Fig.~\ref{fig:system}).  
Crowd-ML is a distributed learning framework that integrates sensing, learning, and
privacy mechanisms together, and can build classifiers or predictors of interest
from crowdsensing data using computing capability of devices with formal privacy
guarantees.

Algorithmically, Crowd-ML learns a classifier or predictor by a distributed incremental
optimization. 
Optimal parameters of a classifier or predictor are found by minimizing
the risk function associated with a given task \cite{Vapnik:2000} (see Section~\ref{sec:system} for details). 
Specifically, the framework finds optimal parameters by incrementally minimizing the risk
function using a variant of stochastic (sub)gradient descent (SGD)~\cite{Robbins:1951}. 
Unlike batch learning, SGD requires only the gradient information to be communicated
between devices and a server, which has two important consequences: 
1) computation load can be distributed among the devices,
enhancing scalability of the system; 
2) private data of the devices need not be communicated directly, enhancing privacy.
By exploiting these two properties, Crowd-ML efficiently learns a classifier or
predictor from a crowd of devices, with a guarantee of $\epsilon$-differential privacy.
Differential privacy mechanism is applied locally on each device, 
using Laplace noise for the gradients and exponential mechanisms for other information
(see Section~\ref{sec:privacy mechanism}).

We show advantages of Crowd-ML by analyzing its scalability and privacy-performance
trade-offs (Section~\ref{sec:analysis}), and by testing the framework with 
demonstrative tasks implemented on Android smartphones and in simulated environments
 under various conditions (see Section~\ref{sec:evaluation}).

In summary, we make the following contributions:
\begin{itemize}
\item We present Crowd-ML, a general framework for machine learning with
  smart devices from crowdsensing data, with many potential applications.
\item We show differential privacy guarantees of Crowd-ML that provide a 
strong privacy mechanism against various types of attacks in crowdsensing.
To the best of our knowledge, Crowd-ML is the first general framework that
integrates sensing, learning, and differentially private mechanisms for crowdsensing.
\item We analyze the framework to show that the cost of privacy preservation 
can be minimized and that the computational and communication overheads on devices
are only moderate, allowing a large-scale deployment of the framework.
\item We implement a prototype and evaluate the framework with a
demonstrative task in a real environment as well as large-scale experiments
in a simulated environment.
\end{itemize}


The remainder of this paper is organized as follows.
We first review related work in Section~\ref{sec:related work}.
Section~\ref{sec:crowd-ml} describes the Crowd-ML framework. 
Section~\ref{sec:analysis} analyzes Crowd-ML in terms of privacy-performance trade-off,
computation, and communication loads. 
Section~\ref{sec:evaluation} presents an implementation of Crowd-ML and 
experimental evaluations.
We discuss remaining issues and conclude in 
Section~\ref{sec:conclusion}. 

\section{Related Work}\label{sec:related work}

Crowd-ML integrates distributed learning algorithms and differential privacy
mechanisms into a crowdsensing system.
In this section, we review related work in crowdsensing and learning systems, and
privacy-preserving mechanisms.

\subsection{Crowdsensing and learning}
\label{sec:crowdsensing}

There is a vast amount of work in crowdsensing, and we focus on 
the system aspect of previous work with representative papers 
(we refer the reader to survey papers
\cite{Lane:2010:IEEECOMM} and \cite{participatorysensing2}).
Crowdsensing systems aim to achieve mass collection and mining of environmental
and human-centric data such as social interactions, political issues of interest,
exercise patterns, and people's impact on the environment \cite{humancentricsensing}.
Examples of such systems include
Micro-Blog \cite{microblog}, PoolView \cite{poolview}, BikeNet
\cite{bikenet}, and PEIR \cite{peir}.
Data collected by crowdsensing can also be used to mine high-level patterns
or to predict variables of interest using machine learning.
Applications of learning applied to crowdsensing include  
learning of bus waiting times \cite{howlongtowait} and 
recognizing user activities (see \cite{Lara:2013} for a review).
Jigsaw \cite{jigsaw} and Lifestreams \cite{lifestreams} also use pattern recognition 
in sensed data from mobile devices.
From the system perspective, these work use devices to passively sense and send data 
to a central server on which analyses take place, which we will refer to as the 
{\it centralized} approach.
In contrast, sensing and learning can be performed purely inside each device without
a server, which we call the {\it decentralized} approach.
For example, SoundSense \cite{Lu:2009} learns a classifier on a smartphone 
to recognized various audio events without communicating with the back-end.
Mixed centralized and decentralized approaches are also used in \cite{cque,ace},
where a portion of computation is performed off-line on a server.
CQue \cite{cque} provides a query interface for privacy-aware probabilistic
learning of users' contexts, and ACE \cite{ace} uses static association
rules to learn users' contexts.
System-wise, our work differs from those centralized or decentralized approaches
in that we use a {\it distributed} approach to perform learning by devices and server
together, which improves privacy and scalability of the system. 
We are not aware of any other crowdsensing system that takes a similar approach. 
Also, the cited papers are oriented towards novel applications, 
but our work focuses on a general framework for learning a wide range of algorithms
and applications. 

Crowd-ML also builds on recent advances in incremental distributed learning \cite{Agarwal:2011:NIPS,Dekel:2011:ICML},
which show that a near-optimal convergence rate is achievable despite communication
delays. A privacy-preserving stochastic gradient descent method is presented briefly in \cite{Song:2013:IEEE}.
Unlike the latter, we presents a complete framework for privacy-preserving
multi-device learning, with performance analysis and demonstrations in real environments.

\subsection{Privacy-preserving mechanisms}
\label{sec:private learning}

Privacy is an important issue in data collection and analysis.
In particular, preserving privacy of users' locations has been studied by many
researchers (see \cite{Krumm:2009} for a survey). 
To preserve privacy of general data types formally, several mechanisms such as 
$k$-anonymity \cite{kanonymity} and secure multiparty computation \cite{Yao:1982:FOCS}
have been proposed, for data publishing \cite{Fung:2010:CSUR} and also for 
participatory sensing \cite{participatorysensing2}.
Recently, differential privacy \cite{Dwork:2004:CRYPTO,Dwork:2006:TC,Dwork:2006:ALP}
has addressed several weaknesses of $k$-anonymity \cite{Ganta:2008:SIGKDD}, 
and gained popularity as a
quantifiable measure of privacy risk. Differential privacy has been used for 
privacy-preserving data analysis platform \cite{Mcsherry:2009:SIGMOD}, 
for sanitization of learned models parameters from data
\cite{Chaudhuri:2011:JMLR}, and for privacy-preserving data mining from distributed
time-series data\cite{Lu:2009}. 
So far, formal and general privacy mechanisms have not been adopted broadly 
in crowdsensing. Among the crowdsensing systems cited in the previous section (~\cite{microblog,poolview,bikenet,peir,howlongtowait,a3c, phoneactivitydataset,jigsaw,lifestreams,Lu:2009,cque,ace}),
only \cite{poolview,peir,cque} provide privacy mechanisms, of which only
\cite{poolview} address the privacy more formally.
To our best knowledge, Crowd-ML is the first framework to provide formal
privacy guarantees in general crowd-based learning with smart devices.

\section{Crowd-ML}\label{sec:crowd-ml}

In this section, we describe our Crowd-ML in detail: system, algorithms, and
privacy mechanisms.

\subsection{System and workflow}
\label{sec:system}

The Crowd-ML system consists of a server and multiple smart devices
that are capable of sensory data collection, numerical computation,
and communication over a public network with the server (see Fig.~\ref{fig:system}).
The goal of Crowd-ML is to learn a classifier or predictor of interest  
from crowdsensing data collectively by multiple devices.
A wide-range of classifiers or predictors can be learned by minimizing an empirical
risk associated with a given task, a common method in statistical learning \cite{Vapnik:2000}.
Formally, let $x \in \mathbb{R}^D$ be a feature vector from preprocessing sensory input
such as audio, video, accelerometer, etc, and $y$ be a target variable we aim to
predict from $x$, such as user activity. For regression, $y$ can be a real number
and for classification, $y$ is a discrete label $y \in \{1,...,C\}$ with $C$
classes. 
We define data as $N$ pairs of (feature vector, target variable)
generated i.i.d. from an unknown distribution by all participating devices up 
to present:
\begin{equation}\label{eq:private data}
\mathcal{D} = \{(x_1,y_1),..., (x_N,y_N)\}.
\end{equation}
Suppose we use a classifier/predictor $h(x;w)$ with a tunable parameter vector $w$,
and a loss function $l(y,h(x;w))$ to measure the performance of the classifier/predictor
with respect to the true target $y$. A wide range of learning algorithms can be
represented by $h$ and $l$, e.g., regression, logistic regression, and 
Support Vector Machine (see \cite{Bottou:2012} for more examples).
If there are $M$ smart devices, we find the optimal parameters $w$ of the classifier/predictor by minimizing the empirical risk over all $M$ devices:
\begin{equation}\label{eq:empirical risk}
\mathcal{R}(w) = \sum_{m=1}^M \frac{1}{|\mathcal{D}_m|} 
\sum_{(x,y) \in \mathcal{D}_m} l(h(x;w),y) + \frac{\lambda}{2}\|w\|^2,
\end{equation}
where $\mathcal{D}_m$ is a set of samples generated from device $m$ only,
and $\frac{\lambda}{2}\|w\|^2$ is a regularization term.
This risk function (\ref{eq:empirical risk}) can be minimized by many optimization
methods. 
In this work we use stochastic (sub)gradient descent (SGD) \cite{Robbins:1951:AMS} 
which is one of the simplest optimization methods and is also suitable for
large-scale learning \cite{Bottou:1998,Bottou:2012}.
SGD minimizes the risk by updating $w$ sequentially 
\begin{equation} \label{eq:sgd}
w{(t+1)} \leftarrow \Pi_{\mathcal{W}} \left[w{(t)} - \eta{(t)} g(t)\right],
\end{equation}
where $\eta{(t)}$ is the learning rate, and
$g(t)$ is the gradient of the loss function
\begin{equation}\label{eq:subgradient}
g = \nabla_w l(h(x;w),y),
\end{equation}
evaluated with the sample $(x,y)$ and the current parameter $w(t)$.
We assume the parameter domain $\mathcal{W}$ is a $d$-dimensional ball of 
some large radius $R$, and the projection is $\Pi_{\mathcal{W}} = \min (1, R/\|w\|) w$. 
By default, we use the learning rate
\begin{equation}\label{eq:learning rate}
\eta^{(t)} = \frac{c}{\sqrt{t}},
\end{equation}
where $c$ is a constant hyperparameter. 
When computing gradients, we use a `minibatch' of $b$ samples to 
compute the averaged gradient
\begin{equation}\label{eq:minibatch}
\tilde{g} = \frac{1}{b} \sum_i \nabla_w l(h(x_i;w),y_i),
\end{equation}
which plays an important role in the performance-privacy trade-off and the scalability
(Section~\ref{sec:analysis}).
In Crowd-ML, risk minimization by SGD is performed by distributing the
main workload (=computation of averaged gradients) to $M$ devices. Note that
each device generates data and compute gradients using its own data. 
The workflow is described in Fig.~\ref{fig:workflow}. 

\begin{figure}[thb]
\centering
\includegraphics[width=1.0\linewidth]{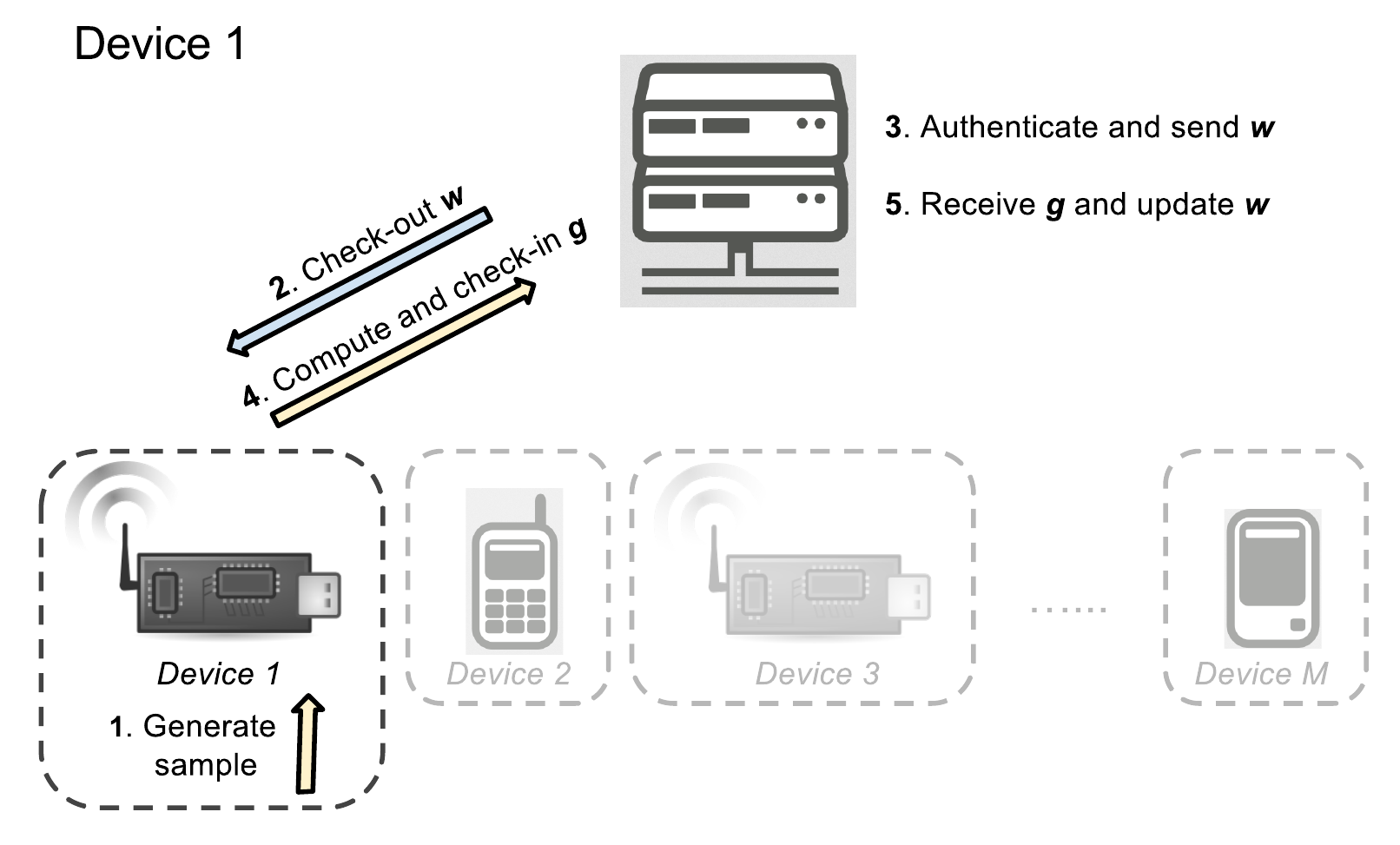}
\caption{Crowd-ML workflow.
1. A device preprocesses sensory data and generates a sample(s).
2. When the number of samples $\{(x,y)\}$ exceeds a certain number, 
the device requests current model parameters $w$ from the server.
3. The server authenticate the device and sends $w$.
4. Using $w$ and $\{(x,y)\}$, the device computes the gradient $g$ and 
send it to the server using privacy mechanisms.
5. The server receives the gradient $g$ and updates $w$. 
While one device is performing routines 1-5, another device(s) are allowed 
to perform the same routines asynchronously. 
Devices can join or leave the task at any time.
}
\label{fig:workflow}
\end{figure}

\subsection{Algorithms}
\label{sec:algorithm}

\begin{algorithm}[tb] \caption{Device side} \label{alg:device}
{\it Input}: privacy levels $\epsilon_g,\epsilon_e,\epsilon_{y^k}$, minibatch size $b$, 
max buffer size $B$, classifier model ($C$, $h$, $l$, $\lambda$ from Eq.~(\ref{eq:empirical risk}))\\
{\it Init}: set $n_s=0$, $n_e = 0$, $n_y^k=0,\;k=1,...,C$\\
{\it Communication to server}: $\hat{g}, n_s, \hat{n}_e, \hat{n}_y^k$	\\
{\it Communication from server}: $w$ \\
{\bf Device Routine 1}
\begin{algorithmic}
\IF{$n_s \geq B$}
	\STATE stop collection to prevent resource outage
\ELSE
	\STATE receive a sample $(x,y)$ (in a regular interval or triggered by events),
	and add to the secure local buffer
	\STATE $n_s = n_s + 1$
\ENDIF
\IF{$n_s \geq b$}
	\STATE checkout $w$ from the server via https
	\STATE call Device Routine 2.
\ENDIF
\end{algorithmic}

{\bf Device Routine 2}
\begin{algorithmic}
\STATE Using $w$ from the server and $\{(x,y)\}$ from the local buffer,
\FOR{$i = 1,...,n_s$}
	\STATE make a prediction $y^\mathrm{pred}=h(x_i;w)$
	\STATE $n_y^{(y^{}_i)} = n_y^{(y^{}_i)} + 1$
	\STATE $n_e = n_e + I[y^\mathrm{pred}_i \neq y^{}_i]$
	\STATE Incur a loss $l(y^\mathrm{pred},y_i)$
	\STATE Compute a subgradient $g_i = \nabla_w l(h(x_i;w))$
\ENDFOR
\STATE Compute the average $\tilde{g} = \frac{1}{n_s} \sum_i g_i + \lambda w$
\STATE Sanitize data with Device Routine 3
\STATE Checkin $\hat{g},\;n_s,\;\hat{n}_e\;\hat{n}_y^k,\;k=1,...,C$ with server via https
\STATE Reset $n_s = 0,\; n_e = 0,\; n_y^k = 0,\;k=1,...,C$
\end{algorithmic}

{\bf Device Routine 3}
\begin{algorithmic}
\STATE Sample $\hat{g} = \tilde{g} + z$ from Eq.~(\ref{eq:perturbation of g})
\STATE Sample $\hat{n}_e = n_e + z$ from Eq.~(\ref{eq:perturbation of n_e})
\STATE Sample $\hat{n}_y^k = n_y^k + z,\;k=1,...,C$ from Eq.~(\ref{eq:perturbation of n_y^k})
\end{algorithmic}
\end{algorithm}

\begin{algorithm}[tb]
   \caption{Sever side} \label{alg:server}
{\it Input}: number of devices $M$, learning rate schedule $\eta(t),\;t=1,2,...,T_{\max}$, desired error $\rho$, 
classifier model ($C$, $h$, $l$, $\lambda$ from Eq.~(\ref{eq:empirical risk}))\\
{\it Init}: $t=0$, randomized $w$, $N_s^m=0,\;N_e^m=0,\;N_y^{k,m},\;m=1,...,M,\;k=1,...,C$\\
{\it Stopping criteria}: $t \geq T_{\max}$ or $\frac{\sum_m^M N_e^m}{\sum_m^M N_s^m} \leq \rho$ 
\\
{\bf Server Routine 1}
\begin{algorithmic}
\WHILE{Stopping criteria not met}
	\STATE Listen to and accept checkout requests
	\STATE Authenticate device
	\STATE Send current parameters $w$ to device
\ENDWHILE
\end{algorithmic}
{\bf Server Routine 2}
\begin{algorithmic}
\WHILE{Stopping criteria not met}
	\STATE Listen to and accept checkin requests
	\STATE Authenticate device (suppose it is device $m$)
	\STATE Receive $\hat{g}$, ${n}_s$, $\hat{n}_e$, $\hat{n}_y^k,\;k=1,...,C$.
	\STATE $N_s^m = N_s^m + n_s$
	\STATE $N_e^m = N_e^m + \hat{n}_e$
	\STATE $N_y^{k,m} = N_y^{k,m} + \hat{n}_y^{k}$
	\STATE $w = w - \eta(t) \hat{g}$
	\STATE $t = t + 1$
\ENDWHILE
\end{algorithmic}
\end{algorithm}

Crowd-ML algorithms are presented in Algorithms~\ref{alg:device} and~\ref{alg:server}.
Device Routine 1 collects samples. When the number of samples
reaches the minibatch size $b$, the routine tries to checks out the current model
parameters $w$ from the server and calls Device Routine 2. 
Device Routine 2 computes the averaged gradient from the stored samples and $w$
received from the server, sanitizes information by Device Routine 3, 
and sends the sanitized information to the server.
Device Routine 3 uses Laplace noise and exponential mechanisms (in the next section)
to sanitize the averaged gradient $\hat{g}$, 
the number of misclassified samples $\hat{n}_e$ and the label counts $\hat{n}_y^k$. 
Device Routines 1-3 are performed independently and asynchronously by multiple devices.

Server Routine 1 sends out current parameters $w$ when requested
and Server Routine 2 receives checkins ($\hat{g}$, $n_s$, $\hat{n}_e$,$\hat{n}_y^k$)
from devices when requested. The whole procedure ends when the total number of
iteration exceeds a maximum value $T_{\max}$, or the overall error is below a threshold $\rho$.

\emph{Remark 1:} In Device Routine 1, if check-out fails, the device keeps collecting
samples and retries check-out later. A prolonged period of network outage for a device
can make the parameter outdated for the device, but it does not affect the
overall learning critically. 
Similarly, failure to check-in information with server in Device Routine 2 is non-critical.

\emph{Remark 2:} In Device Routine 2, we can randomly set aside a small portion of samples
as test data. 
In this case, the misclassification error is computed only
from these held-out samples, and their gradients will not be used in the average $\hat{g}$. 

\emph{Remark 3:} In Server Routine 2, more recent update methods  \cite{Nesterov:2009:MP,Roux:2012:NIPS}
can be used in place of the simple update rule (\ref{eq:sgd})
without affecting differential privacy nor changing device routines. 
Similarly, adaptive learning rates \cite{Duchi:2010:COLT,Schaul:2013:ICML} can be
used in place of (\ref{eq:learning rate}), which can provide a robustness 
to large gradients from outlying or malignant devices. 

\subsection{Privacy mechanism}
\label{sec:privacy mechanism}

In crowdsensing systems, private data of users can be leaked by many ways.
System administrators/analysts can violate the privacy intentionally, or they may
leak private information unintentionally when publishing data analytics.
There are also more hostile types of attacks: 
by malignant devices posing as legitimate devices,
by hackers poaching data stored on the server or eavesdropping on communication
between devices and servers.
Instead of preserving privacy separately for each attack type, 
we can preserve privacy from all these attacks by a {\it local} privacy-preserving
mechanism that is implemented on each device and sanitizes any information before
it leaves the device. 
A local mechanism assumes that an adversary can potentially access all communication
between devices and the server, which subsumes the other attack attacks. 
This is because the other forms of data that are 1) visible to malignant device, 
2) stored in the server, or 3) released in public, are all derived from what is
communicated between devices and the server. 
We adopt a local $\epsilon$-differential privacy as a quantifiable
measure of privacy in Crowd-ML.
Formally, a (randomized) algorithm which takes data $\mathcal{D}$ as input 
and outputs $f$, is called $\epsilon$-differentially private if
\begin{equation}
\frac{P(f(\mathcal{D}) \in \mathcal{S})}
{P(f(\mathcal{D}') \in \mathcal{S})} \leq e^{\epsilon}
\end{equation}
for all measurable $\mathcal{S} \subset \mathcal{T}$ of the output range,
and for all data sets $\mathcal{D}$ and $\mathcal{D}'$ differing in a single item.
That is, even if an adversary has the whole data $\mathcal{D}$ except a single item,
it cannot infer much more about that item from the output of the algorithm $f$.
A smaller $\epsilon$ makes such an inference more difficult, and therefore makes
the algorithm more private-preserving.
When the algorithm outputs a real-valued vector $f \in \mathbb{R}^D$, its global
sensitivity can be defined by
\begin{equation}
S(f) = \max_{\mathcal{D},\mathcal{D}'} \|f(\mathcal{D}) - f(\mathcal{D}')\|_1.
\end{equation}
where $\|\cdot\|_1$ is the $L_1$ norm.
A basic result from the definition of differential privacy is that
a vector-valued function $f$ with sensitivity $S(f)$ can be made 
$\epsilon$-differentially private {\cite{Dwork:2006:TC}}
by adding an independent Laplace noise vector $z$\footnote{As a variant, 
$(\epsilon,\delta)$-differential privacy can be achieved by adding Gaussian noise.}
\begin{equation}
P(z) \propto e^{-\frac{\epsilon}{S(f)}\|z\|_1}.
\end{equation} 
In Crowd-ML, we consider $\epsilon$-differential privacy of any single 
(feature,label)-sample, revealed by communications from all devices to the server,
which are the gradients $\tilde{g}$, the numbers of samples $n_s$, the number of
misclassified samples $n_e$, and the labels counts $n_y^k$
\footnote{The communication from the server to devices $\{w(t)\}$ can 
be reconstructed by (\ref{eq:sgd}) from $\{g(t)\}$, and therefore is redundant to
consider.}. 
\begin{table}[htb]
\caption{Multiclass logistic regression}
\label{tbl:learning algorithms}
\begin{center}
\begin{tabular}{|p{0.10\linewidth}|p{0.80\linewidth}|}
\hline
Prediction & $\arg\max_k w_k'x$ \\
Risk & $\mathcal{R}(w)=\frac{1}{N} \sum_{i}[ -w_{y_i}'x_i + \log\sum_l e^{w_l'x_i}] + \;\;\;\;\frac{\lambda}{2}\sum_k \|w_k\|^2 $ \\
Gradient & $\nabla_{w_k} \mathcal{R} = \frac{1}{N} \sum_{i} x_i [ -I[y_i=k] + P(y=k|x_i)] + \;\;\;\;\lambda w_k$
\\ 
\hline
\end{tabular}
\end{center}
\end{table}
The amount of noise required depends on the choice of loss functions. 
We compute this value for multiclass logistic regression (Table~\ref{tbl:learning algorithms}),
but it can be computed similarly for other loss functions as well. 
By adding element-wise independent Laplace noise $z$ to averaged gradients $\tilde{g}$ 
\begin{equation}\label{eq:perturbation of g}
\hat{g} = \frac{1}{b} \sum_i g_i + z,\;\;P(z) \propto e^{-\frac{\epsilon_g b}{4}|z|},
\end{equation}
we have the following privacy guarantee:
\begin{theorem}[Averaged gradient perturbation]
The transmission of $\tilde{g}$ by Eq.~(\ref{eq:perturbation of g}) is 
$\epsilon_g$-differentially private.
\end{theorem}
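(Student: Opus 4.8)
The plan is to reduce the claim to the standard Laplace-mechanism result recalled earlier in this section: a vector-valued function $f$ of $L_1$ sensitivity $S(f)$ is made $\epsilon$-differentially private by adding Laplace noise of density proportional to $e^{-\frac{\epsilon}{S(f)}\|z\|_1}$. Matching this template against the prescribed density $e^{-\frac{\epsilon_g b}{4}|z|}$ in Eq.~(\ref{eq:perturbation of g}), the identity $\frac{\epsilon}{S}=\frac{\epsilon_g b}{4}$ shows that it suffices to prove that the data-dependent average $\frac{1}{b}\sum_i g_i$ has $L_1$ sensitivity exactly $\frac{4}{b}$; the template then returns the privacy level $\epsilon=\epsilon_g$ with no further work.

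First I would unpack the sensitivity. Two neighbouring minibatches differ in a single sample, so the averaged gradients differ only through one summand; writing the differing sample as $(x_j,y_j)$ against $(x_j',y_j')$, the gap reduces to $\frac{1}{b}(g_j-g_j')$, whose $L_1$ norm is at most $\frac{1}{b}(\|g_j\|_1+\|g_j'\|_1)$ by the triangle inequality. Thus the sensitivity is at most $\frac{2}{b}$ times the worst-case single-sample gradient norm. Note that the regularisation term $\lambda w$ is data-independent (and $w$ is in any case reconstructible from the transmitted gradients), so it contributes only a deterministic shift and drops out of the sensitivity entirely.

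Next I would bound the single-sample norm using the explicit logistic-regression gradient of Table~\ref{tbl:learning algorithms}. For one sample the loss gradient stacks the per-class blocks $g_k = x\,(P(y=k\mid x)-I[y=k])$, so $\|g\|_1=\|x\|_1\sum_k|P(y=k\mid x)-I[y=k]|$. The key calculation is that this coefficient sum telescopes: the true-class term gives $1-P(y\mid x)$ and the off-class terms give $\sum_{k\neq y}P(y=k\mid x)=1-P(y\mid x)$, so the sum equals $2\,(1-P(y\mid x))\le 2$. Under the standing normalisation $\|x\|_1\le 1$ this forces $\|g\|_1\le 2$, and therefore $S=\frac{1}{b}(2+2)=\frac{4}{b}$, as required.

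I expect the main obstacle to be this gradient bound rather than the mechanism bookkeeping. Two points need care: verifying that the softmax probabilities collapse the absolute-value sum to $2(1-P(y\mid x))$ (so that the constant is $2$ and not, say, $C$), and making explicit the feature normalisation $\|x\|_1\le 1$, on which the constant $4$ in Eq.~(\ref{eq:perturbation of g}) entirely rests. Once $S=\frac{4}{b}$ is in hand, substituting into the Laplace template closes the proof.
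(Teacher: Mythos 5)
Your proposal is correct and follows essentially the same route as the paper's own proof: bound the $L_1$ sensitivity of the averaged gradient by $4/b$ via the telescoping identity $\sum_k |P(y=k\mid x)-I[y=k]| = 2(1-P(y\mid x)) \le 2$ together with $\|x\|_1\le 1$, then invoke the standard Laplace-mechanism result. Your explicit remark that the $\lambda w$ term cancels between neighbouring minibatches is a point the paper leaves implicit; the only item the paper adds that you omit is the observation that the sensitivity over the whole sequence of transmitted gradients $\hat{g}(1),\dots,\hat{g}(T)$ equals that of a single minibatch (since each sample lies in exactly one minibatch), which is harmless for the single-transmission statement you proved.
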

\noindent See Appendix~\ref{sec:proof1} for proof.

To sanitize $n_e$ and $n_y^k$, we add `discrete' Laplace noise \cite{Inusah:2006:JSPI}
 as follows:  
\begin{eqnarray}\label{eq:perturbation of n_e}
\hat{n}_e &=& n_e + z,\;P(z) \propto e^{-\frac{\epsilon_e}{2}|z|},
\label{eq:perturbation of n_e}\\
\hat{n}_y^k &=& n_y^k + z,\;P(z) \propto e^{-\frac{\epsilon_{y^k}}{2} |z|},
\label{eq:perturbation of n_y^k}
\end{eqnarray}
where $z = 0,\pm 1,\pm 2,...$.
These mechanisms has the following privacy guarantees:
\begin{theorem}[Error and label counts]
The transmission of $n_e$ and $n_y^k$ by Eqs.~(\ref{eq:perturbation of n_e})
 and (\ref{eq:perturbation of n_y^k}) is $\epsilon_e$- and $\epsilon_{y^k}$-
differentially private, respectively.
\end{theorem}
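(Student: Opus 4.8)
The plan is to treat $n_e$ and $n_y^k$ as integer-valued counting queries and to establish a discrete analogue of the Laplace mechanism already quoted in the privacy-mechanism section. Two ingredients are needed: a bound on the sensitivity of each count, and a likelihood-ratio calculation for the discrete (two-sided geometric) noise in Eqs.~(\ref{eq:perturbation of n_e}) and~(\ref{eq:perturbation of n_y^k}). First I would record that the noise $z$ has probability mass $P(z)=\frac{1-e^{-\alpha}}{1+e^{-\alpha}}\,e^{-\alpha|z|}$ on $z\in\mathbb{Z}$, with $\alpha=\epsilon_e/2$ for $\hat{n}_e$ and $\alpha=\epsilon_{y^k}/2$ for $\hat{n}_y^k$. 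The crucial observation is that the normalizing constant is independent of the data, so it cancels in any ratio of output probabilities, which is exactly why the discrete mechanism behaves like its continuous counterpart.

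Next I would bound the sensitivity. Writing $n_e=\sum_i I[y_i^{\mathrm{pred}}\neq y_i]$ as a sum of $\{0,1\}$-valued indicators over the buffered samples, changing a single sample alters exactly one summand, so $|n_e(\mathcal{D})-n_e(\mathcal{D}')|\le 1$ for datasets differing in one item (and at most $2$ if one conservatively bounds a replacement as a removal plus an addition). Identically, $n_y^k=\sum_i I[y_i=k]$ changes by at most one indicator. This is the step that must be pinned down precisely, because the constant $\tfrac{1}{2}$ in the exponent is chosen so that the product of the sensitivity and $\alpha$ equals the target privacy level.

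Finally I would carry out the ratio bound. For any integer output $s$, the probability of observing $\hat{n}_e=s$ under $\mathcal{D}$ versus $\mathcal{D}'$ equals $e^{-\frac{\epsilon_e}{2}\left(|s-n_e(\mathcal{D})|-|s-n_e(\mathcal{D}')|\right)}$, and by the reverse triangle inequality $\bigl||s-a|-|s-a'|\bigr|\le|a-a'|$ this is at most $e^{\frac{\epsilon_e}{2}|n_e(\mathcal{D})-n_e(\mathcal{D}')|}$. Substituting the sensitivity bound gives a pointwise ratio of at most $e^{\epsilon_e}$, and since this holds for every atom $s$ it is preserved after summing over any output set $\mathcal{S}$, which is precisely the definition of $\epsilon_e$-differential privacy; the argument for $\hat{n}_y^k$ is word-for-word identical with $\epsilon_{y^k}$ in place of $\epsilon_e$.

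The only genuine subtlety, and hence the main obstacle, is fixing the sensitivity constant so that it is consistent both with the chosen noise scale (the factor of $2$) and with the adjacency convention used elsewhere for the gradient mechanism. The probabilistic content is routine, since the geometric normalizer cancels and the reverse triangle inequality dispatches all output values uniformly, including the tails where $s$ lies far from both counts.
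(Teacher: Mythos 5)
Your proof is correct, but it takes a genuinely different route from the paper's. The paper does not compute the likelihood ratio directly: it reinterprets the addition of discrete Laplace noise in Eqs.~(\ref{eq:perturbation of n_e}) and (\ref{eq:perturbation of n_y^k}) as an instance of the exponential mechanism with score function $d(\hat{n}_e,n_e) = -|\hat{n}_e - n_e|$, shows this score has sensitivity $1$ under a one-item change (the same indicator-sum observation you make), and then invokes Theorem~6 of \cite{McSherry:2007:FOC}, whose generic factor of $2$ converts the $\epsilon_e/2$ exponent into an $\epsilon_e$-differential-privacy guarantee; it also includes one step you leave implicit, namely that any single sample appears in only one minibatch, so the sensitivity of the entire sequence of transmitted counts equals that of a single transmission. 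Your first-principles computation is more elementary and in fact sharper: since the two-sided geometric normalizer $\frac{1-e^{-\alpha}}{1+e^{-\alpha}}$ is data-independent and cancels in every ratio, the factor of $2$ in the exponential-mechanism theorem (which exists precisely to absorb data-dependent normalization) is vacuous here, and with sensitivity $1$ your pointwise ratio is $e^{\epsilon_e/2}$, i.e., the mechanism is actually $\epsilon_e/2$-differentially private, strictly stronger than the theorem claims; your stated bound of $e^{\epsilon_e}$ corresponds to the conservative sensitivity of $2$ you mention parenthetically, which is unnecessary, since under the paper's differing-in-one-item convention a replacement flips exactly one indicator in $n_e=\sum_i I[y_i^{\mathrm{pred}}\neq y_i]$ or $n_y^k=\sum_i I[y_i=k]$. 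In short, the paper's route buys brevity and uniformity (the same citation machinery is reused for the label-perturbation mechanism of Theorem~3), while your route buys self-containedness and exposes that the constant in the paper's guarantee has a factor-of-two slack.
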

\noindent See Appendix~\ref{sec:proof2} for proof.

Practically, a system administrator chooses $\epsilon$ depending on the desired level
of privacy for the data collected.
A small $\epsilon (\to 0)$ may be used for data that users deem highly private
such as current location, and a large $\epsilon (\to \infty)$ may be used for 
less private data such as ambient temperature. 


\section{Analysis}\label{sec:analysis}

In this section, we analyze the privacy-performance trade-off and the scalability of
Crowd-ML. 
As discussed in Related Work, most existing crowdsensing systems use
purely centralized or purely decentralized approaches, while Crowd-ML uses a 
distributed approach.
By design, Crowd-ML achieves differential privacy with little loss of performance
($O(1/b)$), only moderate computation load due to its simple optimization method,
and reduced communication load and delay ($O(1/b)$), where $b$ is the minibatch size.

\subsection{Privacy vs Performance}

Privacy costs performance:
the more private we make the system, the less accurate
the outcome of analysis/learning is.
From Theorem~1, Crowd-ML is $\epsilon$-differentially private by perturbing averaged
gradients. 
The centralized approach can also be made $\epsilon$-differentially private by
feature and label perturbation (Appendix~\ref{sec:proof3}). 
Below we compare the impact of privacy on performance between the centralized and 
Crowd-ML.
The performance of an SGD-based learning can be represented by its rate of convergence
to the optimal value/parameters $\mathbb{E}[l(w(t)) - l(w^{\ast})]$ at iteration $t$,
which in turn depends on the properties of the loss $l(\cdot)$ 
(such as Lipschitz-continuity and strong-convexity) and the step size
$\eta(t)$, with the best known rate being $O(1/t)$ \cite{Nemirovski:2009}.
When other conditions are the same, the convergence rate is roughly proportional 
$\mathbb{E}[l(w(t)) - l(w^{\ast})] \propto G^2$ to the amount of noise 
in the estimated gradient $G^2 = \sup_t \mathbb{E}[ \|\hat{g}(t)\|^2]$
\cite{Shamir:2013}.
For Crowd-ML, we have from (\ref{eq:perturbation of g}) 
\begin{equation}
\mathbb{E}[\|\hat{g}\|^2] = \mathbb{E}[\|\tilde{g}\|^2] + \mathbb{E}[\|z\|^2]
= \frac{1}{b}\mathbb{E}[\|g\|^2] + \frac{32D}{(b \epsilon_g)^2},
\end{equation}
where the first term is the amount of noise due to sampling,
and the latter is due to Laplace noise mechanism with $D$-dimensional features.
By choosing a large enough batch size $b$, the impact of sampling noise and
Laplace noise can be made arbitrarily small\footnote{although a larger batch size 
means fewer updates given the same number of samples $N$, and too large a batch size
can negatively affect the convergence rate (see \cite{Cotter:2011} for discussion).}.
In contrast, the centralized approach has to add Laplace noise of {\it constant}
variance $\frac{8}{\epsilon^2}$ to each feature and perturb labels with a
constant probability (Appendix~\ref{sec:proof3}). 
Regardless of which optimization method is used (SGD or not), the centralized approach
has no means of mitigating the negative impact of constant noise on the accuracy
of learned model, which will be especially problematic with a small $\epsilon$. 

In the decentralized approach, a device need not interact with a server, and
is almost free of privacy concerns. However, the increased privacy comes at
the cost of performance. In Crowd-ML and the centralized approach, samples
pooled from all devices are used in the learning process, 
whereas in the decentralized approach, each device can use only a fraction ($\sim1/M$)
of samples. 
This undermines the accuracy of a model learned by the decentralized approach.
For example, it is known from the VC-theory for binary classification problems that
the upper-bound of the estimation error with a $1/M$-times smaller sample size is
$\sqrt{M/\log M}$-times larger \cite{Anthony:1999:Book}.

\subsection{Scalability}

Scalability is determined by computation and communication loads and latencies
on both device and server sides. 
We compare these factors between centralized, crowd, and decentralized learning approaches.


\subsubsection{Computation load}

For all three approaches, we assume the same preprocessing is performed
on each device to compute features from raw sensory input or metadata. 
On the device side, the centralized learning approach requires generation of 
Laplace noise per sample on the device. 
The crowd and the decentralized approaches perform partial and full learning on the device, respectively, and requires more processing.
Specifically, Crowd-ML requires computation of a gradient per sample, a vector
summation (for averaging) per sample, and generation of Laplace random noise
per minibatch.
A low-end smart device capable of floating-point computation can perform these operations. 
The decentralized learning approach can use any learning algorithms, including
SGD similar to Crowd-ML. However, if the decentralized approach is to make up for
the smaller sample size ($1/M$) compared to Crowd-ML, it may require
more complex optimization methods which results in higher computation load.
For all three approaches, the number of devices $M$ do not affect per-device computation
load. 
Computational load on the server is also different for these approaches. 
The centralized approach puts the highest load on the server, as all computations take
place on the server. In contrast, Crowd-ML puts minimal load on the server which is 
the SGD update~(\ref{eq:sgd}), since the main computation is performed distributed 
by the devices. 
\if0
Computation load on the server side is also different for the three approaches.
The centralized approach can either process the data stream 
incrementally similar to Crowd-ML, or process all data in batch after finishing
data collection. Computation and space requirements of SGD updates (Formula~(\ref{eq:sgd}))
are minimal and very scalable with a large number of samples and devices.  
On the other hand, if we use batch learning algorithms for the centralized approach,
scalability can be an issue due to increasing computational/space complexity with $N$
depending on the algorithm used.
\fi


\subsubsection{Communication load}

To process incoming streams of data from the device in time, the network and the
server should have enough throughput. 
The centralized learning approach requires $N$ number of samples to be 
sent over the network to the server.
For Crowd-ML with a minibatch size of $b$, devices send $N/b$ gradients altogether,
and receives the same number of current parameters, both of the same dimension 
as a feature vector. Therefore, the data transmission is reduced by a factor of
$b/2$ compared to the centralized approach.

\subsubsection{Communication latency}\label{sec:delay}

When using a public (and mobile) network, latency is non-negligible. 
In the centralized approach, latency may not be an issue, since the server 
need not required to send any real-time feedback to the devices. 
In Crowd-ML, latency is an issue that can affect its performance.
There are three possible delays that add up to the overall latency of communication:
\begin{itemize}\setlength{\itemsep}{0pt}
\item Request delay($\tau_{\mathrm{req}}$): time since the check-out request
from a device until the receipt of the request at the server
\item Check-out delay ($\tau_{\mathrm{co}}$):   
time since the receipt of a request at the server and the receipt of the parameter at
the device
\item Check-in delay ($\tau_{\mathrm{ci}}$): time since the
receipt of the parameters at the device until the receipt of the check-ins at the server
\end{itemize}
Due to delays, if a device checks out the parameter $w$ at time $t_0$ and checks in
the gradient $\hat{g}$ and the server receives $\hat{g}$ at time $t_0+\tau_{\mathrm{co}}+\tau_{\mathrm{ci}}$, the server may have already updated
the parameters $w$ multiple times using the gradients from other devices received
during this time period.
This number of updates is roughly 
$(\tau_{\mathrm{co}}+\tau_{\mathrm{ci}}) \times M F_s/b$,
where $M$ is the number of devices, $F_s$ is the data sampling rate per device,
and $1/b$ is the reduction factor due to minibatch. 
Again, choosing a large batch size $b$ relative to $M F_s$ can reduce the latency.
While exact analysis of impact of latency is difficult, there are several 
related results known in the literature without considering privacy.
Nedi{\'{c}} et al.~proved that delayed asynchronous incremental update converges with
probability 1 to an optimal value, assuming a finite maximum latency.
Recent work in distributed incremental update \cite{Agarwal:2011:NIPS,Dekel:2011:ICML}
also shows that a near-optimal convergence rate is achievable despite delays. 
In particular, Dekel et al.~\cite{Dekel:2011:ICML} shows that delayed incremental
updates are scalable with $M$ by adapting the minibatch size. 

\section{Evaluation}\label{sec:evaluation}

In this section, we describe a prototype of Crowd-ML implemented on off-the-shelf 
android phones and activity recognition experiments on android smartphones.
We also perform digit and object recognition experiments under varying conditions 
in simulated environments and demonstrate the advantages of Crowd-ML analyzed in
Section~\ref{sec:analysis}. 

\subsection{Implementation}\label{sec:implementation}

We implement a Crowd-ML prototype with three components: a Web portal,
commercial off-the-shelf smart devices, and a central server. 
\if0
\begin{figure}[t!]
  \centering
  \includegraphics[width=0.80\linewidth]{./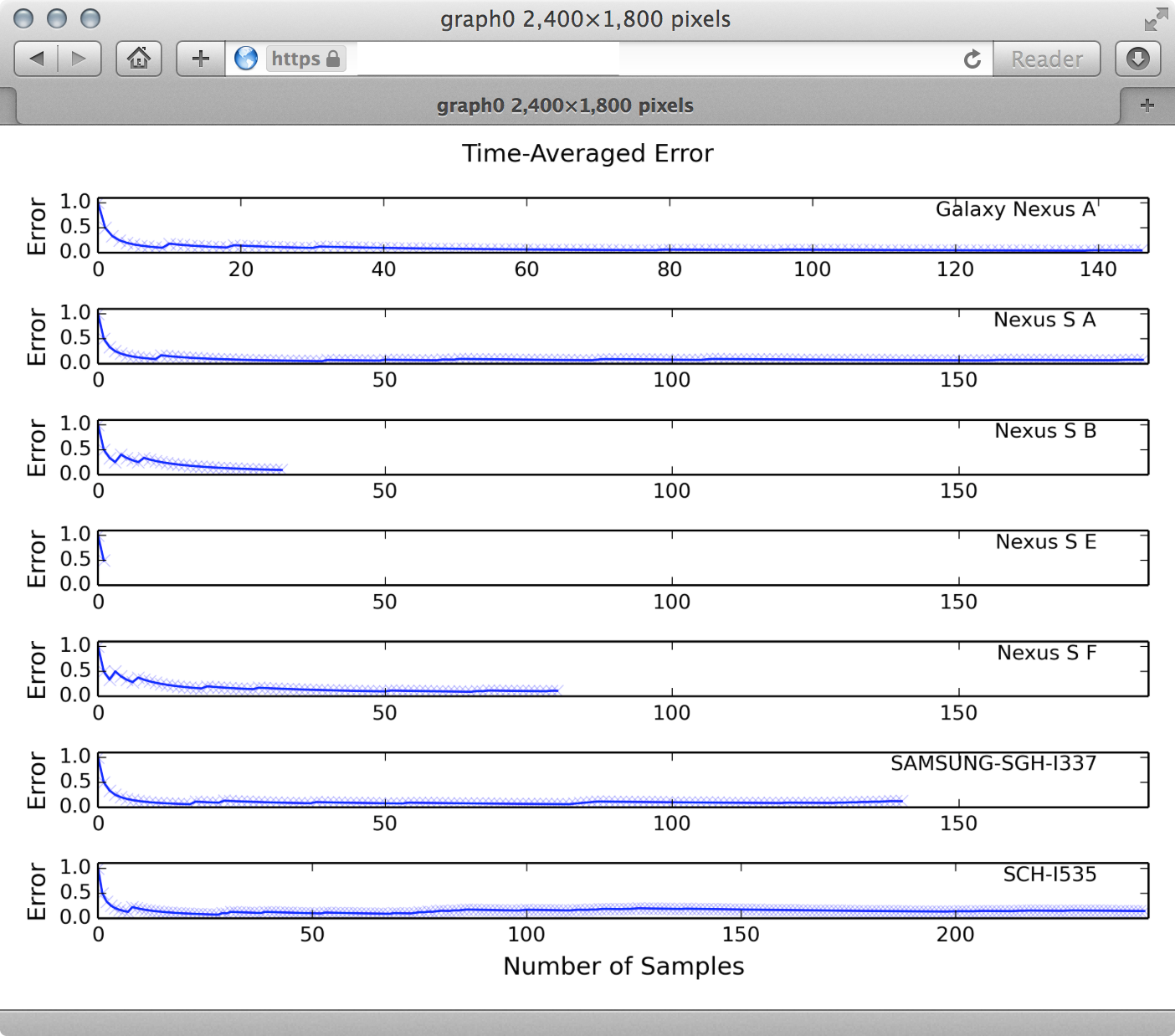}
  \caption{Crowd-ML implementation: Web portal for participating in tasks
  and checking learning status.
  }
  \label{fig:impl}
\end{figure}
\fi
On the device side, we implement Algorithm~\ref{alg:device} on commercial
off-the-shelf smartphones as an app using Android OS 4.3$+$.
Our prototype uses smartphones, but will be easily ported to other 
smart device platforms. 
On the server side, we implement Algorithm~\ref{alg:server} on a
Lenovo ThinkCentre M82 machine with a quad-core 3.2~GHz Intel Core
i5-3470 CPU and 4~GB RAM running Ubuntu Linux 14.04. The server runs
the Apache Web server (version 2.4) and a MySQL database (version 5.5). 

Also on the server side, our Crowd-ML prototype provides a Web portal over
HTTPS where users can browse ongoing crowd-learning tasks and join
them by downloading the app to their smart devices. To enhance
transparency, details of tasks (objective, sensory data collected,
labels collected, and learning algorithms used) and our privacy mechanisms
is explained.  It also displays timely statistics about crowd-learning
applications such as error rates and activity label distributions,
which are differentially private. 
We implement the portal in Python using the Django\footnote{\url{http://www.djangoproject.com}}
Web application framework and Matplotlib\footnote{\url{http://matplotlib.org}}
for statistical visualization.

\subsection{Activity Recognition in Real Environments}
In this experiment, we perform activity recognition on smart devices. 
The purpose of this demonstration is to show Crowd-ML working in
a real environment, so we choose a simple task of recognizing three types of 
user activities (``Still'', ``On Foot'', and ``In Vehicle''). 
We install a prototype Crowd-ML application on 7
smartphones (Galaxy Nexus, Nexus S, and Galaxy S3) running Android 4.3
or 4.4. 
The seven smartphones are carried by college students and faculty
over a period of a few days.  The devices' triaxial accelerometers
are sampled at 20~Hz.
In this demonstration, we avoid manual annotation of activity labels 
to facilitate data acquisition, 
and instead use Google's activity recognition service 
to obtain ground truth labels.
Acceleration magnitudes $|a| = \sqrt{a_x^2 + a_y^2 + a_z^2}$ are
computed continuously over 3.2~s sliding windows. Feature extraction
is performed by computing the 64-bin FFT of the acceleration magnitudes.
We set the sampling rate $F_s = 1/30$~Hz, that is, a feature vector $x$ and
its label $y$ is generated every 30~s. 
However, to avoid getting highly correlated samples and to increase diversity of features, we collect a sample only when its label has changed from its previous value.
For example, samples acquired during sleeping are discard automatically as they all
have ``Still'' labels. 
This lowers the actual sampling rate to about $F_s = 1/352$~Hz (or every six minute or so). 
With this low rate, no battery problem was observed.

We use 3-class logistic regression (Table~\ref{tbl:learning algorithms})
with $\lambda=0, b=1, \epsilon^{-1}=0$ and a range of $\eta$ values. 
Repeated experiments with different parameters are time-consuming, and 
we leave the full investigation to the second experiment in a simulated environment. 
\begin{figure}[thb]
\centering
\includegraphics[width=1.0\linewidth]{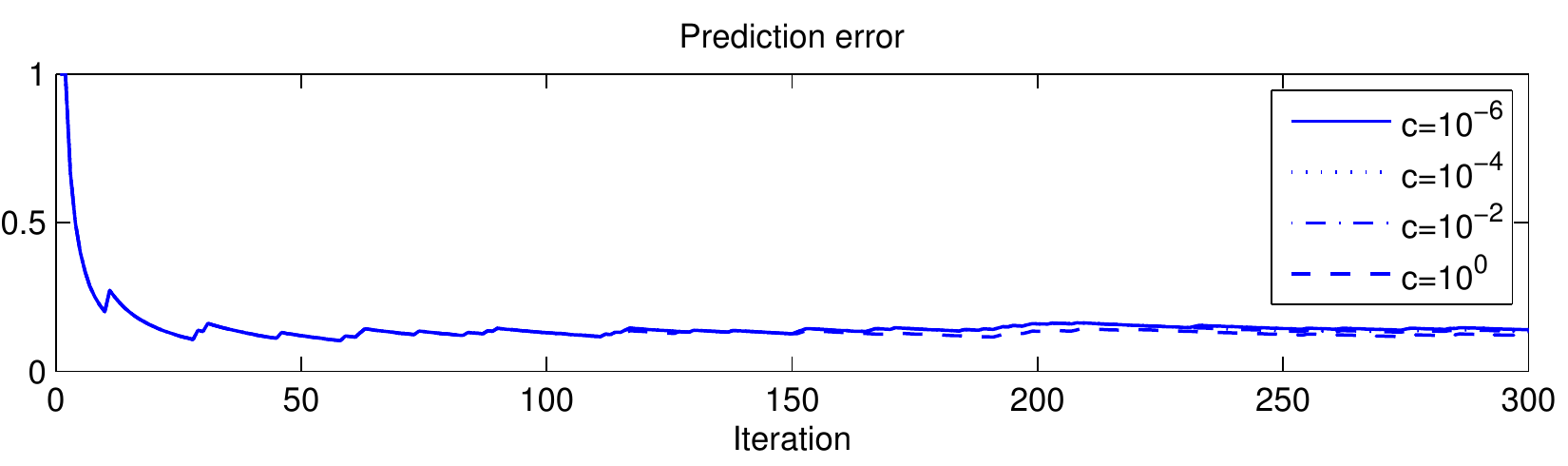}
\caption{Time-averaged error across all devices for activity recognition task. 
}
\label{fig:smartphones}
\end{figure}
In Fig.~\ref{fig:smartphones}, we shows the collective error curves for the first 
300 samples from the 7 devices. 
The error is a time-averaged misclassification error as the learning progresses: 
$
\mathrm{Err}(t) = \frac{1}{t} \sum_{i=1}^{t} I[y^{}_i \neq y^{\mathrm{pred}}_i (w_i)].
$
The error curves for different learning rates (\ref{eq:learning rate})
are very similar, and virtually converge after only 50 samples (=7 samples per device).
This experiment is a proof-of-concept that Crowd-ML can learn a common classifier 
fast, from only a small number of samples per user.

\subsection{Digit/Object Recognition in Simulated Environments}\label{sec:digit recognition}
To evaluate Crowd-ML under various conditions, we perform a series of experiments
on handwritten digit recognition and visual object recognition.
Since the two results are quite similar, we only describe the digit recognition results
(object recognition result is in Appendix~\ref{sec:object recognition}).
The MNIST dataset\footnote{\url{http://yann.lecun.com/exdb/mnist/}} consists of
60000 training and 10000 test images of handwritten digits (0 to 9),
which is a standard benchmark dataset for learning algorithms.
The task is to classify a test image as one of the 10 digit classes.
The images from MNIST data are preprocessed with PCA to have a reduced dimension of 50,
and $L_1$ normalized. 
In this experiment, we compare the performance of centralized, Crowd-ML, and decentralized
learning approaches using the same data and classifier (multiclass logistic regression),
under different conditions such as privacy level $\epsilon$, minibatch size $b$, 
and delays. 
To test the algorithms with a full control of parameters, we run the algorithms
in a simulated environment instead of on a real network. We can therefore 
choose the number of devices and maximum delays arbitrarily.
For simplicity, we set $\tau = \tau_{\mathrm{req}}=\tau_{\mathrm{co}}=\tau_{\mathrm{ci}}$
(Section~\ref{sec:delay}).
The $\tau$ is the maximum delay, and the actually delays
are sampled randomly and uniformly from $[0,\tau]$ for each communication instance.\footnote{We can test with any distribution other than uniform distribution as well.} 

All results in this section are averaged test errors from 10 trials. For each 
trial, assignment of samples, order of devices, perturbation noise, and amounts of delay
are randomized. Test errors are computed as functions of the iteration 
(=the number of samples used), up to five passes through the data.
Hyperparameters $\lambda$ (Table~\ref{tbl:learning algorithms}) and $c$
(\ref{eq:learning rate}) are selected from the averaged test error from 10 trials.
We set the number of devices $M=1000$. Consequently, each
device has $60$ training and $10$ test samples on average.

Fig.~\ref{fig:mnist_M1000_1} compares the performance of the
centralized, crowd, and decentralized learning approaches, without privacy or
delay ($\epsilon^{-1}=0,\; b=1,\; \tau=0$).
The error of centralized batch training is the smallest $(0.1)$, in a tie with 
Crowd-ML.
The error curve of Crowd-ML converges to the same low value as centralized approach.
It shows that incremental update by SGD in Crowd-ML is as accurate as batch learning,
when privacy and delay are not considered.
In contrast, the error curve of decentralized approach converges at a slower rate and
also converges to a high error $(\sim 0.5)$, despite using the same overall number of
samples as other algorithms, due to the lack of data sharing.

\begin{figure}[thb]
\centering
\includegraphics[width=1\linewidth]{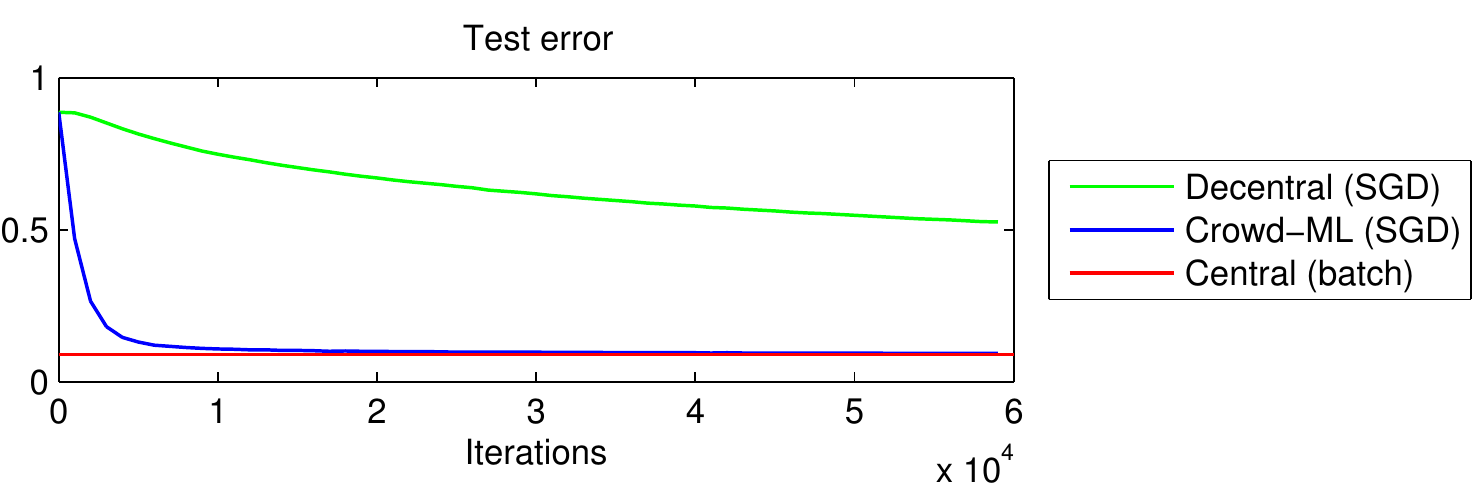}
\caption{Comparison of test error for centralized, crowd, and decentralized
learning approaches, without delay or privacy consideration.
The curves show how error decreases as the number of iteration (=number of samples used)
increases over time. The batch algorithm is not incremental and therefore is a constant. 
}
\label{fig:mnist_M1000_1}
\end{figure}

We perform tests with varying levels of privacy $\epsilon$. The privacy
impacts the centralized approach via (\ref{eq:perturbation of x}) and (\ref{eq:perturbation of y})\footnote{The features and labels for test data are not perturbed.} 
and also Crowd-ML via (\ref{eq:perturbation of g}).
With low privacy ($\epsilon^{-1} \to 0$), the performance of both centralized and
crowd approaches are almost the same as Fig.~\ref{fig:mnist_M1000_1}, 
and we omit the result.
With high privacy ($\epsilon \to 0$), the performance of both approaches degrades
to a unusable level. 
Here we show their performances at $\epsilon^{-1} = 0.1$ in Fig.~\ref{fig:mnist_M1000_2},
where the performance is in a transition state between high and low privacy regions.
Firstly, the centralized and crowd approaches both perform worse than they did
in Fig.~\ref{fig:mnist_M1000_1}, which is the price of privacy preservation. 
Among these results, Crowd-ML with a minibatch size $b=20$ has
the smallest asymptotic error, much below the centralized (batch).
Crowd-ML with $b=1$ and $10$ still achieves similar or better asymptotic error
compared to Central (batch). 
As predicted from Section~\ref{sec:analysis}, increasing the minibatch
size improves the performance of Crowd-ML. 
When SGD is used for centralized approach (Central SGD) 
with perturbed features and labels,
its performance is very poor ($\sim 0.9$) regardless of the minibatch size,
due to the larger noise required to provide the same level $\epsilon$ of privacy
as Crowd-ML.

\begin{figure}[thb]
\centering
\includegraphics[width=1\linewidth]{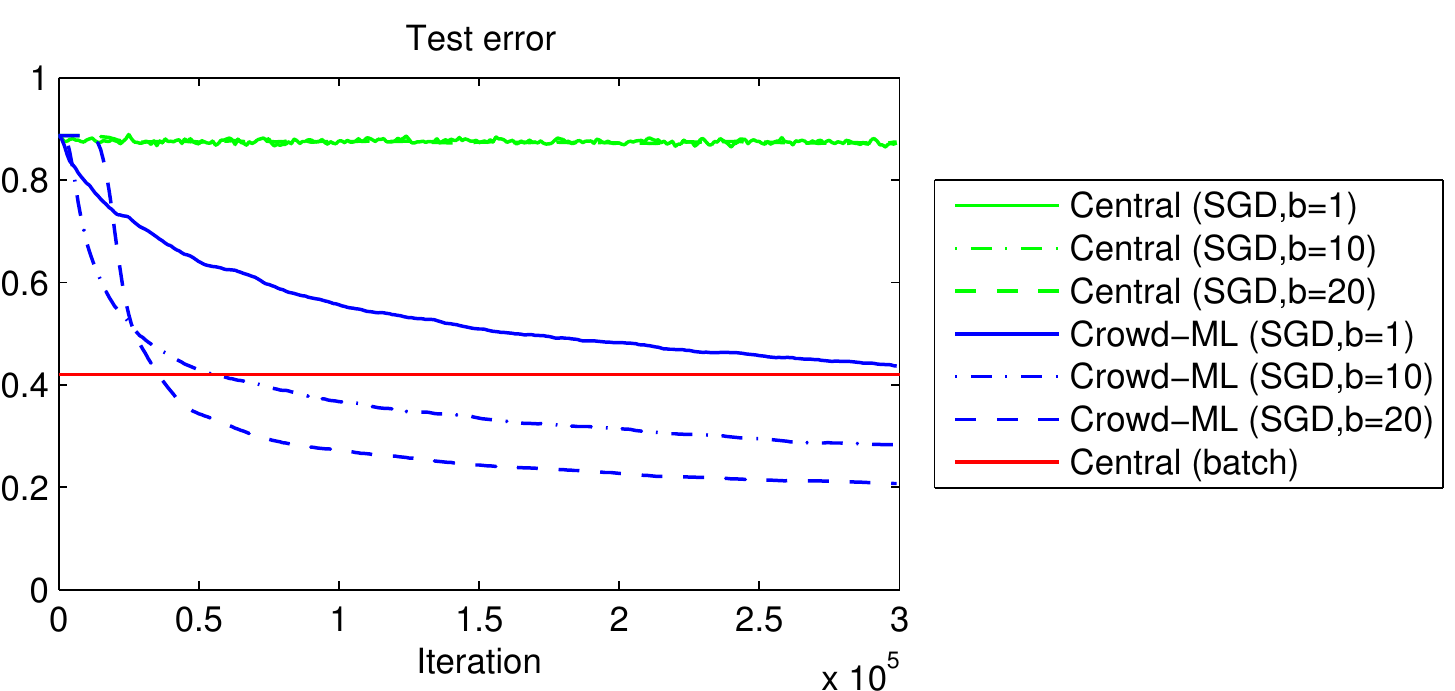}
\caption{Comparison of test error for centralized and crowd learning approaches
with privacy ($\epsilon^{-1} = 0.1$), varying minibatch sizes ($b$), and  
no delay.}
\label{fig:mnist_M1000_2}
\end{figure}

Lastly, we look at the impact of delays on Crowd-ML with privacy $\epsilon^{-1}=0.1$.
We test with different delays in the unit of $\Delta=\tau/(M F_s)$, 
that is, the number of samples generated by all device during the delay of
size $\tau$.
In Fig.~\ref{fig:mnist_M1000_3}, we show the results with two minibatch sizes
($b=1, 20$) and varying delays ($1\Delta,10\Delta,100\Delta,1000\Delta$).
The delay of $1000 \Delta$ means that a maximum of 3$\times$1000 samples are 
generated among the devices, between the time a single device requests a check-out
from the server and the time the server received the check-in from that device,
which is quite large.
Fig.~\ref{fig:mnist_M1000_3} shows that the increase in the delay somewhat 
slows down the convergence with a minibatch size of 1, and the converged value of
error is similar to or worse than Central (batch).
However, it also shows that with a minibatch size of 20, delay has little
effect on the convergence, and the error is much lower than Central (batch).
Note that with the minibatch size of 20, there is a small plateau in the beginning
of error curves, reflecting the fact that the devices are initially waiting
for their minibatches to be filled before computing begins. After this initial
waiting time, the error starts to decrease at a fast rate.


\begin{figure}[thb]
\centering
\includegraphics[width=1\linewidth]{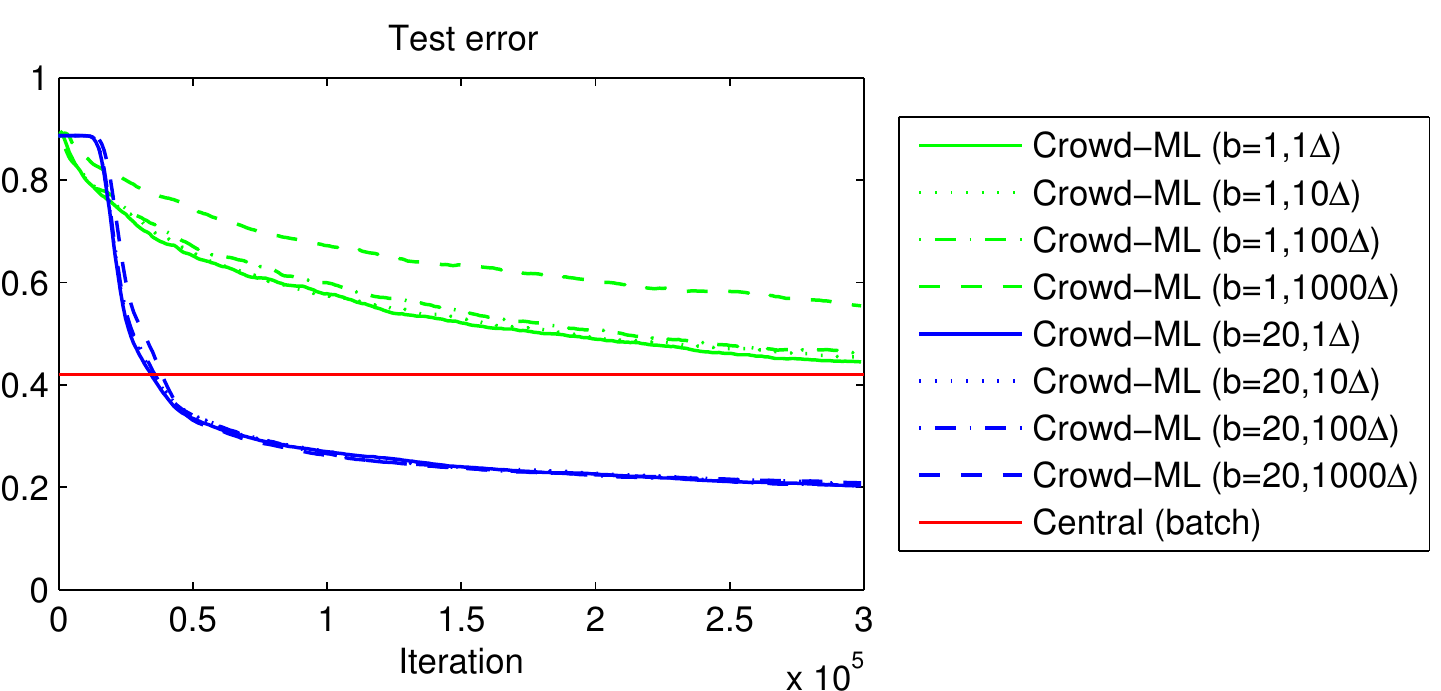}
\caption{Impact of delays on Crowd-ML with privacy ($\epsilon^{-1} = 0.1$),
varying minibatch sizes, and varying delays.
}
\label{fig:mnist_M1000_3}
\end{figure}

\section{Conclusion}\label{sec:conclusion}

In this paper, we proposed Crowd-ML, a machine learning framework for a crowd of
smart devices. 
Compared to previous crowdsensing systems, Crowd-ML is a framework
that integrates sensing, learning, and privacy mechanisms together, 
and can build classifiers or predictors of interest from crowdsensing data 
using computing capability of smart devices.
Algorithmically, Crowd-ML utilizes recent advances in distributed and incremental
learning, and implements strong differentially private mechanisms.
We analyzed Crowd-ML and showed that Crowd-ML can outperform centralized approaches
while providing better privacy and scalability, and can also take advantages of
larger shared data which decentralized approaches cannot.
We implemented a prototype of Crowd-ML and evaluated the framework with a simple
activity recognition task in a real environment as well as larger-scale experiments
in simulated environments which demonstrate the advantages of the design of Crowd-ML.
Crowd-ML is a general framework for a range of different learning algorithms
with crowdsensing data, and is open to further refinements for
specific applications.

\if0
Lastly, there are several remaining issues in the framework. 
One possible threat we did not address is the attack by an adversarial which 
poses as a legitimate device and send random or ill-chosen information to the server.
Authentication can partially prevent this, but a robust mechanism to detect malignant
devices may be needed. 
Another issue is with model selection. The optimal hyperparmeters 
are usually unknown in advance, and may require multiple runs of the algorithms
to find them. This can affect the value of $\epsilon$ \cite{Chaudhuri:2011:JMLR}.
These issues will be addressed in future study.
\fi

\if0
\section*{Acknowledgments}
\label{sec:acknowledgments}

We thank our students for their assistance in data collection.
\fi

\appendix

\subsection{Proof of Theorem 1}
\label{sec:proof1}

In our algorithms, a device receives $w$ from the server and sends averaged gradients
$\hat{g}$ along with other information.
We assume $\|x\|_1 \leq 1$ which can be easily achieved by normalizing the data.
The sensitivity of an averaged gradient for logistic regression is $4/b$ as shown below. 
There are $C$ parameter vectors $w_1,...,w_C$ for multiclass logistic regression.
Let the matrix of gradient vectors corresponding to $C$ parameter vectors be 
\begin{eqnarray*}
g &=& [g_1\;g_2\;\cdots\;\;g_C] = x [P_1\;\cdots\;P_{y}\mathrm{-}1\;\cdots\;P_C] + \lambda [w_1\;\cdots\;w_C]
\\
&=& x M + \lambda [w_1\;\cdots\;w_C],
\end{eqnarray*}
where $P_j = P(y=j|x; w)$ is the posterior probability, and $M$
is a row vector of $P_j$'s. 
Without loss of generality, consider two minibatches $\mathcal{D}$ and $\mathcal{D}'$
that differ in only the first sample $x_1$.
The difference of averaged gradients $\tilde{g}(\mathcal{D})$ and $\tilde{g}'(\mathcal{D}')$ is
\[
\|\tilde{g} - \tilde{g}'\|_1 \leq \frac{1}{b}(\|x_1 M_1\|_1 + \|x'_1 M_1'\|_1)
\leq \frac{4}{b},
\]
To see $\|M_1\|_1 \leq 2$, note that the absolute sum of the entries of $M_1$ 
is $2(1-P_{y_1}) \leq 2$. 
The sensitivity of multiple minibatches $\hat{g}(1), ... , \hat{g}(T)$
is the same as the sensitivity of a single $\hat{g}(t)$, and the
$\epsilon$-differential privacy follows from Proposition 1 of \cite{Dwork:2006:TC}.

\subsection{Proof of Theorem 2}
\label{sec:proof2}

In addition to the averaged gradients, a device sends to the server
the numbers of samples $n_s$, the number of misclassified samples $n_e$, and
the labels counts $n_y^k$.
Perturbation by adding discrete Laplace noise is equivalent to random sampling
by exponential mechanism \cite{McSherry:2007:FOC} with 
$P(\hat{n}_e|n_e) \propto e^{-\frac{\epsilon_e}{2} |\hat{n}_e - n_e|}$, $\hat{n}_e \in \mathbb{Z}$.
If two datasets $\mathcal{D}$ and $\mathcal{D}'$ are different in only one item,
then the score function $d = -|\hat{n}_e - n_e|$ changes at most by 1.
That is, $\max_{\mathcal{D},\mathcal{D}'}\;|d(\hat{n}_e,n_e(\mathcal{D}))-d(\hat{n}_e,n_e(\mathcal{D}'))|=1$. 
As with multiple gradients, the sensitivity of multiples sets of 
($\hat{n}_e$, $\hat{n}_y^k$) is the same as the sensitivity of a single set, 
and $\epsilon_e$-differential privacy follows from Theorem 6 of \cite{McSherry:2007:FOC}.
Proof of $\epsilon_{y^k}$-differential privacy of $n_y^k$ is similar.

\emph{Remark 1:} Unlike the gradient $\hat{g}$, the information ($n_s$, $\hat{n}_e$, $\hat{n}_y^k$)
is not required for learning itself, but for monitoring the progress of each device
on the server side. 
Therefore, $\epsilon_e$ and $\epsilon_{y^k}$ can be set to be very small without
affecting the learning performance, so that $\epsilon = \epsilon_g + \epsilon_e + C\epsilon_{y^k} \approx \epsilon_g$. 

\emph{Remark 2:} $\hat{n}_e$ and $\hat{n}_y^k$ can be negative with a small probability,
but have a limited effect on the estimates of the error rate and the prior at the server.
After receiving $T$ minibatches, the error rate and the prior estimates are
\begin{equation}
\mathrm{Err}^\mathrm{est} = \frac{\sum_i^T \hat{n}_e(i)}{\sum_i^T n_s(i)}
\;\;\mathrm{and}\;\;
P^\mathrm{est}(y=k) = \frac{\sum_i^T \hat{n}_y^k(i)}{\sum_i^T n_s(i)}.
\end{equation}
Since $\hat{n}_e(i)-n_e(i)$ is independent for $i=1,2,...$ and has zero-mean and 
constant variance $\frac{2e^{-\epsilon_e/2}}{(1-e^{-\epsilon_e/2})^2}$ \cite{Inusah:2006:JSPI}, 
the estimate of error rate converge almost surely to the true error rate with vanishing
variances as $T$ increases.
The same can be said of the estimate of prior $P(y)$. 

\subsection{Differential Privacy in Centralized Approach}
\label{sec:proof3}

For completeness of the paper, we also describe the $\epsilon$-differential privacy
mechanisms for the centralized approach.
In the centralized approach, data are directly sent to the server.
Without a privacy mechanism, an adversary can potentially observe all data.
To prevent this, $\epsilon$-differential privacy can be enforced by perturbing
the features 
\begin{equation}\label{eq:perturbation of x}
f(x) = x + z,\;,\;\;P(z) \propto e^{-\frac{\epsilon_x}{2}|z|},
\end{equation}
and also perturbing the labels. 
To perturb labels, we use exponential mechanism to sample a noisy label
$\hat{y}$ given a true label $y$ from
\begin{equation} \label{eq:perturbation of y}
P(\hat{y}|y) \propto e^{\frac{\epsilon_y}{2} d(y,\hat{y})},
\;\;y, \hat{y} \in \{1,...,C\}
\end{equation}
where we use the score function $d(y,\hat{y}) = I[y = \hat{y}]$. 
\begin{theorem}[Feature and label perturbation]
The transmission of $x$ and $y$ by feature perturbation (\ref{eq:perturbation of x})
 and exponential mechanism  (\ref{eq:perturbation of y})
is $\epsilon_x$- and $\epsilon_y$-differentially private.
\end{theorem}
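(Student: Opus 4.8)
The plan is to treat the two claims separately, since the theorem asserts that feature transmission is $\epsilon_x$-differentially private and label transmission is $\epsilon_y$-differentially private as two independent guarantees. Each reduces to a standard mechanism already invoked elsewhere in the paper, so the real work is only in computing the relevant sensitivity and checking that it matches the prescribed noise scale. I would therefore structure the argument as two short applications of the Laplace and exponential mechanisms, mirroring the proofs of Theorems~1 and~2.

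For the feature perturbation (\ref{eq:perturbation of x}), I would reuse the normalization $\|x\|_1 \le 1$ already assumed in the proof of Theorem~1 and consider two datasets $\mathcal{D}, \mathcal{D}'$ differing in a single sample, with feature vectors $x$ and $x'$. The released function is the identity $f(x) = x$, whose global $L_1$-sensitivity is
\[
S(f) = \max_{\|x\|_1 \le 1,\, \|x'\|_1 \le 1} \|x - x'\|_1 \le \|x\|_1 + \|x'\|_1 \le 2 .
\]
Substituting $S(f) = 2$ into the Laplace-mechanism result already stated in the paper (adding $z$ with $P(z) \propto e^{-\frac{\epsilon}{S(f)}\|z\|_1}$ yields $\epsilon$-differential privacy \cite{Dwork:2006:TC}) gives exactly the noise scale $P(z) \propto e^{-\frac{\epsilon_x}{2}|z|}$ of (\ref{eq:perturbation of x}), so the transmission of $x$ is $\epsilon_x$-differentially private.

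For the label perturbation (\ref{eq:perturbation of y}), I would identify the mechanism as the exponential mechanism of McSherry and Talwar with quality score $d(y,\hat{y}) = I[y = \hat{y}]$ over the finite output set $\{1,\dots,C\}$, in direct parallel to the argument for $n_e$ and $n_y^k$ in the proof of Theorem~2. The key quantity is the sensitivity of this score: for a fixed candidate output $\hat{y}$, replacing the single differing sample changes the true label $y$, flipping the indicator $I[y=\hat{y}]$ by at most one, so
\[
\max_{\mathcal{D},\mathcal{D}'} \big| d(y(\mathcal{D}), \hat{y}) - d(y(\mathcal{D}'), \hat{y}) \big| = 1 .
\]
With score sensitivity equal to $1$, sampling $\hat{y}$ with weight $e^{\frac{\epsilon_y}{2} d(y,\hat{y})}$ is $\epsilon_y$-differentially private by Theorem~6 of \cite{McSherry:2007:FOC}.

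I expect the only delicate point to be the bookkeeping of constants: verifying that the factor $2$ in the feature sensitivity cancels the $\tfrac{\epsilon_x}{2}$ exponent, and that the exponential-mechanism convention (weight $e^{\epsilon d/(2\Delta)}$ for score sensitivity $\Delta$) reproduces the $\tfrac{\epsilon_y}{2}$ exponent precisely when $\Delta = 1$. Everything else is handled by the two black-box guarantees already cited. Since the theorem claims each privacy level separately rather than a joint guarantee for the pair $(x,y)$, no composition argument is required; were a combined bound on the full sample desired, one would additionally invoke sequential composition to obtain $(\epsilon_x + \epsilon_y)$-differential privacy.
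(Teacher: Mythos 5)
Your proposal is correct and takes essentially the same route as the paper's own proof: the paper likewise treats feature transmission as the identity map with $L_1$-sensitivity $2$ under $\|x\|_1 \leq 1$ (invoking Proposition~1 of \cite{Dwork:2006:TC} for the Laplace mechanism) and treats label transmission as an exponential mechanism whose score $d(\hat{y},y)=I[\hat{y}=y]$ changes by at most $1$ (invoking Theorem~6 of \cite{McSherry:2007:FOC}). Your explicit verification of the constants and your closing remark that no composition argument is needed (with $\epsilon_x+\epsilon_y$ available via sequential composition if a joint bound were wanted) merely spell out details the paper states without elaboration.
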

\begin{proof}
Assume $\|x\|_1 \leq 1$. 
Feature transmission is an identity operation and therefore has sensitivity $2$.
For label transmission, the score function $d(\hat{y},y) = I[\hat{y} = y]$ 
changes at most by 1 by changing $y$.
From Proposition 1 of \cite{Dwork:2006:TC} and Theorem 6 of \cite{McSherry:2007:FOC}, respectively, 
we achieve $\epsilon_x$- and $\epsilon_y$-differential privacy of data.
\end{proof}
Note that the sensitivity is independent of the number of features and labels sent,
and we have to add the same level of independent noise to the features 
and apply the same amount of label perturbation. 
An overall $\epsilon$-differential privacy is achieved by
$\epsilon = \epsilon_x + \epsilon_y$.
The required privacy levels $\epsilon_x$ and $\epsilon_y$ can be chosen differently,
and we use $\epsilon_x = \epsilon_y = \epsilon/2$ in the experiments.

\subsection{Experiments with Visual Object Recognition Task}\label{sec:object recognition}

We repeat the experiments in Section~\ref{sec:digit recognition} for
 an object recognition task using
CIFAR-10 dataset, which consists of images of 10 types of objects 
(airplane, automobile, bird, cat, deer, dog, frog, horse, ship, truck)
collected by \cite{Krizhevsky:2009:TR}.
We use 50,000 training and 10,000 test images from CIFAR-10.
To compute features, we use a convolutional neural network
\footnote{\url{https://github.com/jetpacapp/DeepBeliefSDK}} trained using 
ImageNet ILSVRC2010 dataset\footnote{\url{http://www.image-net.org/challenges/LSVRC}},
which consists of 1.2 million images of 1000 categories.
We apply CIFAR-10 images to the network, and use the 4096-dimensional output 
from the last hidden layer of the network as features. 
Those features are preprocessed with PCA to have a reduced dimension of 100,
and are $L_1$ normalized. 
We use the same setting in Section~\ref{sec:digit recognition} to test Crowd-ML 
on this object recognition task. The results are given in Figs.~\ref{fig:cifar10_M1000_1},
\ref{fig:cifar10_M1000_2}, \ref{fig:cifar10_M1000_3}.
The figures are very similar to the handwritten digit recognition task 
(Figs.~\ref{fig:mnist_M1000_1}, \ref{fig:mnist_M1000_2}, \ref{fig:mnist_M1000_3}),
except that the error is larger (e.g., $0.3$ in Fig.~\ref{fig:cifar10_M1000_1})
than the error for digit recognition ($0.1$ in Fig.~\ref{fig:mnist_M1000_1}).
This is because CIFAR dataset is more challenging than MNIST due to variations
in color, pose, view point, and background of object images. 

\begin{figure}[thb]
\centering
\includegraphics[width=0.95\linewidth]{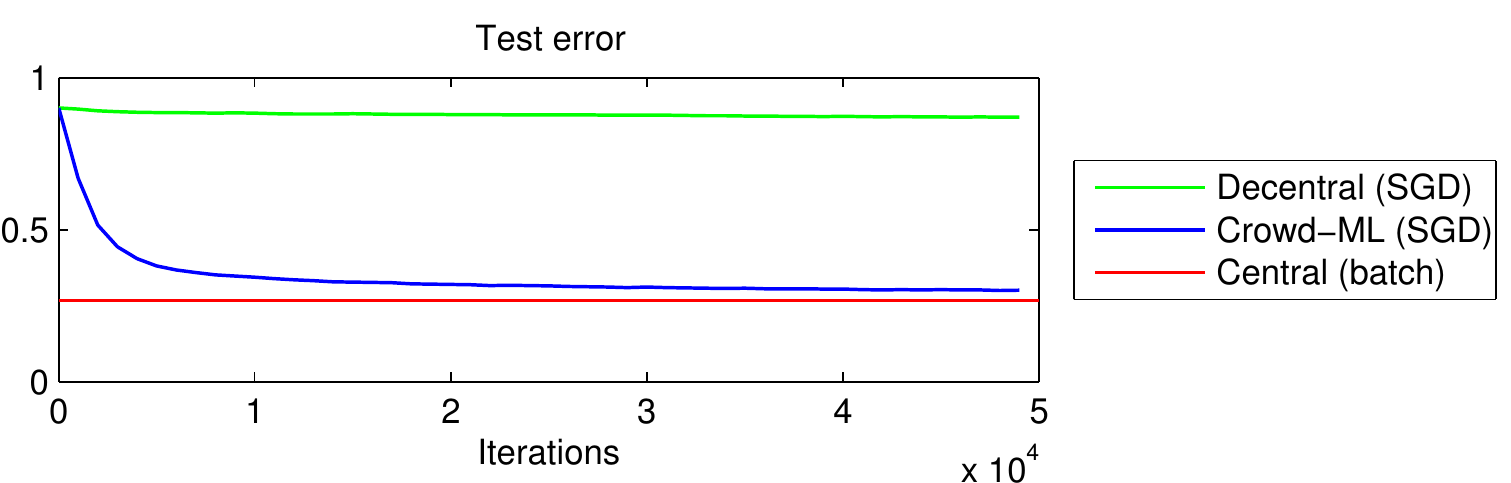}
\caption{Comparison of test error for centralized, crowd, and decentralized
learning approaches, without delay or privacy consideration.
}
\label{fig:cifar10_M1000_1}
\end{figure}

\begin{figure}[thb]
\centering
\includegraphics[width=0.95\linewidth]{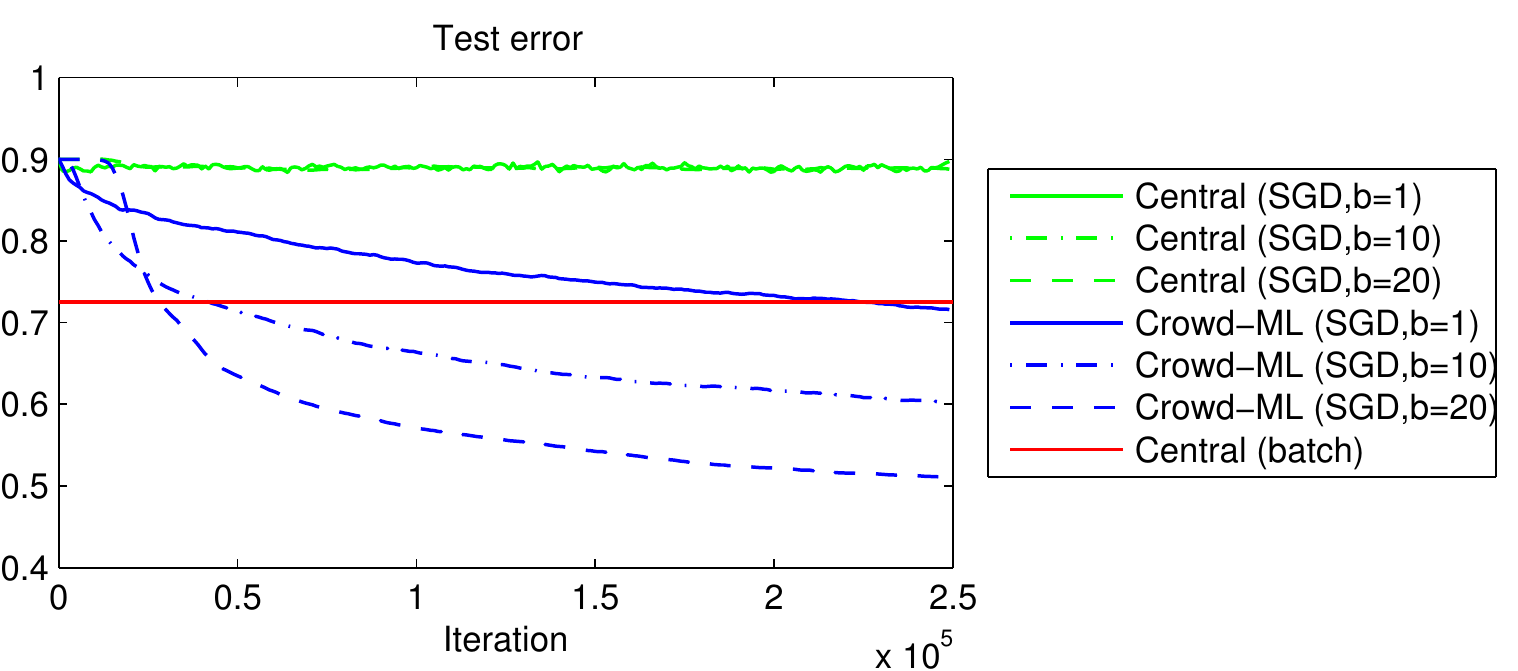}
\caption{Comparison of test error for centralized and crowd learning approaches
with privacy ($\epsilon^{-1} = 0.1$), varying minibatch sizes ($b$), and  
no delay.}
\label{fig:cifar10_M1000_2}
\end{figure}

\begin{figure}[thb]
\centering
\includegraphics[width=0.95\linewidth]{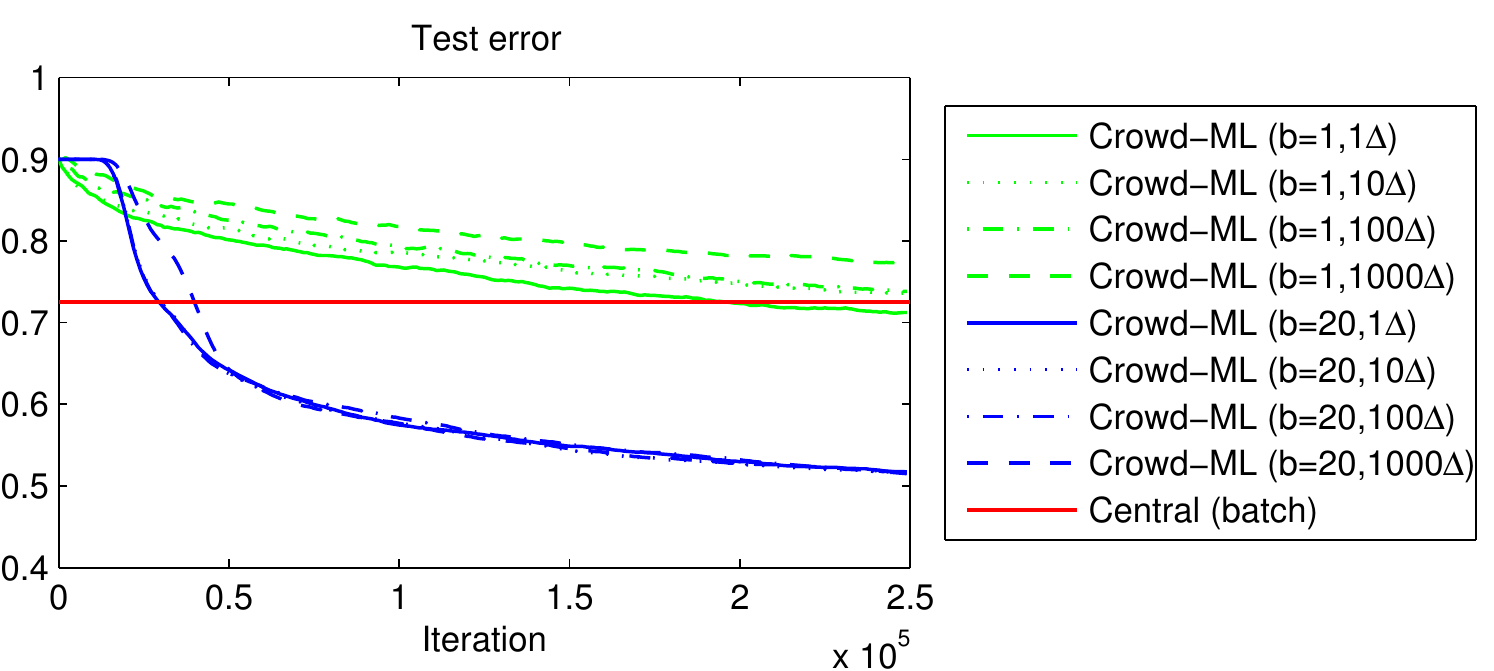}
\caption{Impact of delays on Crowd-ML with privacy ($\epsilon^{-1} = 0.1$),
varying minibatch sizes, and varying delays.
}
\label{fig:cifar10_M1000_3}
\end{figure}

\bibliographystyle{IEEEtran}
\bibliography{icdcs15_jh,refs_ac,nips14_1_jh,mobicase12_jh}


\end{document}